\documentclass[conference]{IEEEtran}
\IEEEoverridecommandlockouts

\usepackage{cite}
\usepackage{amsmath,amssymb,amsfonts}
\usepackage{graphicx}
\usepackage{textcomp}
\usepackage{xcolor}

\usepackage{algorithm}
\usepackage{algpseudocode}
\usepackage{amsthm}

\theoremstyle{definition}

\newtheorem{theorem}{Theorem}[section]
\newtheorem{lemma}[theorem]{Lemma}
\newtheorem{corollary}[theorem]{Corollary}

\DeclareMathOperator*{\argmin}{arg\,min}

\DeclareMathOperator{\tr}{tr}

\newcommand{\R}{\mathbb{R}}
\newcommand{\C}{\mathbb{C}}

\def\BibTeX{{\rm B\kern-.05em{\sc i\kern-.025em b}\kern-.08em
    T\kern-.1667em\lower.7ex\hbox{E}\kern-.125emX}}
\begin{document}

\title{When Learning Hurts: Fixed-Pole RNN for Real-Time Online Training
\thanks{\vspace{-0.1in} $^{*}$ A.~Morgan and U.~Khan contributed equally to this work. \vspace{-0.1in}}
}

\author{
\IEEEauthorblockN{Alexander~Morgan$^{*}$}
\IEEEauthorblockA{\textit{dept. of EECS, MIT}\\Cambridge, MA, USA  \vspace{-0.35in}} 

\and
\IEEEauthorblockN{Ummay~Sumaya~Khan$^{*}$}
\IEEEauthorblockA{\textit{dept. of ECE, Virginia Tech}\\Alexandria, VA, USA  \vspace{-0.35in}} 

\and
\IEEEauthorblockN{Lingjia~Liu}
\IEEEauthorblockA{\textit{dept. of ECE, Virginia Tech}\\Alexandria, VA, USA  \vspace{-0.35in}} 

\and
\IEEEauthorblockN{Lizhong~Zheng}
\IEEEauthorblockA{\textit{dept. of EECS, MIT}\\Cambridge, MA, USA \vspace{-0.35in} } 

}

\maketitle

\begin{abstract}
Recurrent neural networks (RNNs) can be interpreted as discrete-time state-space models, where the state evolution corresponds to an infinite-impulse-response (IIR) filtering operation governed by both feedforward weights and recurrent poles. While, in principle, all parameters including pole locations can be optimized via backpropagation through time (BPTT), such joint learning incurs substantial computational overhead and is often impractical for applications with limited training data.
Echo state networks (ESNs) mitigate this limitation by fixing the recurrent dynamics and training only a linear readout, enabling efficient and stable online adaptation. In this work, we analytically and empirically examine why learning recurrent poles does not provide tangible benefits in data-constrained, real-time learning scenarios. Our analysis shows that pole learning renders the weight optimization problem highly non-convex, requiring significantly more training samples and iterations for gradient-based methods to converge to meaningful solutions. Empirically, we observe that for complex-valued data, gradient descent frequently exhibits prolonged plateaus, and advanced optimizers offer limited improvement. In contrast, fixed-pole architectures induce stable and well-conditioned state representations even with limited training data. Numerical results demonstrate that fixed-pole networks achieve superior performance with lower training complexity, making them more suitable for online real-time tasks.
\end{abstract}

\begin{IEEEkeywords}
Recurrent neural network (RNN), state space model (SSM), gradient descent (GD), echo state network (ESN), reservoir computing (RC), non-convex optimization.
\end{IEEEkeywords}

\section{Introduction}
Recurrent neural networks (RNNs) are a popular method for modeling sequential data with applications ranging from communications system design to audio and natural language processing~\cite{goodfellow2016deep}. Despite their representational power, traditional RNNs require extensive backpropagation through time (BPTT) for weight updates, which imposes significant computational burden and demands large volumes of labeled data, often unattainable in low-latency wireless settings~\cite{letter_ummay}. 

Echo State Networks (ESNs)~\cite{jaeger2001echo} have emerged as a lightweight alternative to conventional RNNs by eliminating the need to train recurrent connections. In this framework, the internal reservoir dynamics are governed by fixed, randomly initialized weights, and only the output layer undergoes supervised adaptation. This architectural decoupling substantially reduces computational burden, making ESNs well-suited for low-latency training in scenarios where labeled data and pilot resources are limited such as wireless signal detection and channel equalization~\cite{zhou2020learning, zhou2020rcnet, xu2022rc, xu2023detect}.

Recent developments have moved beyond random reservoir design by leveraging structural information from the physical domain to inform weight initialization. In particular, the studies in~\cite{jere2023channel, jere2023theoretical, 10974467} provide a formal analysis of reservoir computing applied to wireless channel equalization. These works show that when the recurrent weights are configured to mirror the characteristics of minimum-phase (MP) channels, it can achieve near-optimal bit-error rate (BER) performance, aligning with fundamental theoretical limits. 

However, no prior work provides a theoretical and empirical investigation into the fundamental limitations of learning recurrent dynamics in RNNs for low-latency signal processing tasks. In this work, we show that gradient descent (GD) based recurrent pole learning induces a highly non-convex and ill-conditioned optimization problem, and lack of convexity and unbounded condition numbers pose key challenges leading to optimization plateaus, even in linear settings. Experiments on wireless symbol detection validate the analysis, demonstrating that fixed-pole reservoir architectures exhibit stable training and competitive performance compared to trainable RNNs.

This paper is structured as follows: Section~\ref{sec:rnn_bptt} reviews the fundamentals of RNN training via BPTT. Section~\ref{sec:pole_learning_hard} analyzes the limitations of gradient-based pole learning in RNNs. Section~\ref{sec:result} presents empirical validation. Section~\ref{sec:conclusion} concludes the work.

\section{RNN pole learning via BPTT}
\label{sec:rnn_bptt}
Consider a linear single-input and single-output RNN with a single recurrent neuron and identity activation. Let $x[t] \in \mathbb{R}$ denote the input signal and $h[t] \in \mathbb{R}$ denote the hidden state of the neuron at time $t$. The linear state update equation is:
\begin{equation}
    h[t] = w_{\mathrm{rec}}\, h[t-1] + w_{\mathrm{in}}\, x[t],
\end{equation}
where $w_{\mathrm{rec}} \in \mathbb{R}$ is the recurrent feedback weight and $w_{\mathrm{in}} \in \mathbb{R}$ is the input coupling weight. Taking the $z$-transform under zero initial conditions yields
\begin{align}
    H(z) &= w_{\mathrm{rec}} z^{-1} H(z) + w_{\mathrm{in}} X(z), \notag\\
    \frac{H(z)}{X(z)} &= \frac{w_{\mathrm{in}}}{1 - w_{\mathrm{rec}} z^{-1}}.
\end{align}
Thus, each recurrent neuron realizes a first-order IIR filter with pole located at $w_{\mathrm{rec}}$ and feedforward gain $w_{\mathrm{in}}$.

We now extend this interpretation to a network of $N_h$ recurrent neurons. Let $\mathbf{x}[t] \in \mathbb{R}^{N_{\mathrm{in}}}$ denote the input vector at time $t$, and let $\mathbf{h}[t] \in \mathbb{R}^{N_h}$ denote the hidden state vector. The linear RNN dynamics become $\mathbf{h}[t] = \mathbf{W}_{\mathrm{rec}}\, \mathbf{h}[t-1] + \mathbf{W}_{\mathrm{in}}\, \mathbf{x}[t],$ where $\mathbf{W}_{\mathrm{rec}} \in \mathbb{R}^{N_h \times N_h}$ is the recurrent weight matrix and $\mathbf{W}_{\mathrm{in}} \in \mathbb{R}^{N_h \times N_{\mathrm{in}}}$ is the input weight matrix.

The output at time $t$ is computed as 
\begin{equation}
    \hat{\mathbf{y}}[t] = \mathbf{W}_{\mathrm{out}}\, \mathbf{h}[t],
\end{equation}
with output weight matrix, $\mathbf{W}_{\mathrm{out}} \in \mathbb{R}^{N_{\mathrm{out}} \times N_h}$. 

Given a target sequence $\mathbf{Y} \in \mathbb{R}^{N_{\mathrm{out}} \times T}$, the sequence-level loss is defined as $\mathcal{L}(\hat{\mathbf{Y}}, \mathbf{Y})$. In conventional RNN training, all parameters $\mathbf{W} \in \{ \mathbf{W}_{\mathrm{in}}, \mathbf{W}_{\mathrm{rec}}, \mathbf{W}_{\mathrm{out}} \}$ are optimized jointly using BPTT to learn the system dynamics. The gradient with respect to any weight $\mathbf{W}$ satisfies~\cite{lillicrap2019backpropagation}
\begin{equation}
    \frac{\partial \mathcal{L}}{\partial \mathbf{W}}
    =
    \sum_{t=0}^{T-1}
    \frac{\partial \mathcal{L}}{\partial \mathbf{h}[t]}
    \frac{\partial \mathbf{h}[t]}{\partial \mathbf{W}}.
\end{equation}
The backpropagated state gradients obey the recursion~\cite{werbos2002backpropagation}
\begin{equation}
    \frac{\partial \mathcal{L}}{\partial \mathbf{h}[t]}
    =
    \sum_{k=t}^{T-1}
    (\mathbf{W}_{\mathrm{rec}}^\top)^{k-t}
    \left(\mathbf{W}_{\mathrm{out}}^\top
    \frac{\partial \mathcal{L}}{\partial \hat{\mathbf{y}}[k]}\right),
\end{equation}

which reveals the dependence of gradient flow on the spectral structure of $\mathbf{W}_{\mathrm{rec}}$, resulting in repeated propagation of error gradients, amplifying or attenuating based on pole magnitudes. Under linearization, $\mathbf{W}_{\mathrm{rec}}$ is diagonalizable with eigenvalues $\{\lambda_i\}$, and the recurrent dynamics can be interpreted as a bank of parallel, non-interacting IIR filters with poles $\lambda_i$~\cite{jere2023channel}. Consequently, the gradient with respect to each pole scales as  
\vspace{-0.05in}
\begin{equation}
    \frac{\partial \mathcal{L}}{\partial \lambda_i}    \propto    \sum_{k=t}^{T-1} \lambda_i^{k-t},
    \vspace{-0.02in}
\end{equation}
which grows or decays exponentially depending on $|\lambda_i|$. When $|\lambda_i| > 1$, gradients explode; when $|\lambda_i| < 1$, gradients vanish. 
  
In contrast, ESNs circumvent this difficulty by fixing the recurrent dynamics. The recurrent matrix $\mathbf{W}_{\mathrm{rec}}$ is randomly initialized and its spectral radius satisfies $\rho(\mathbf{W}_{\mathrm{rec}}) < 1$, ensuring stability. The matrices $\mathbf{W}_{\mathrm{rec}}$ and $\mathbf{W}_{\mathrm{in}}$ are kept fixed, which effectively fixes the associated pole locations. Learning is restricted to the output layer, where the weights are obtained by solving a convex least‑squares optimization problem.

\vspace{-0.1in}
\begin{equation}
    \mathbf{W}_{\mathrm{out}}
    =
    \argmin_{\mathbf{W}}
    \sum_{t=0}^{T-1}
    \| \mathbf{y}[t] - \mathbf{W}_{out}\; \mathbf{h}[t] \|_2^2 = \mathbf{YH}^{\dagger},
\end{equation}

where, $(\cdot)^{\dagger}$ denotes the Moore–Penrose pseudo-inverse and $\mathbf{H}\in \mathbb{R}^{N_h \times T}$ is the state matrix. This decoupling of recurrent dynamics from training avoids pole optimization entirely, enabling stable and efficient optimization while preserving temporal dynamics with fixed poles.

\section{Fundamental Limits of Pole Learning}
\label{sec:pole_learning_hard}
In this section, we analytically show that in a noiseless setting where pole learning is identifiable, even though standard optimization formulations for learning pole locations should work in principle, they are inherently ill-conditioned and non-convex, so they often fail in practice.

Let $B = \{(b_k, \beta_k)\}_{k = 1}^K$, with $b_k, \beta_k \in \C$, be a set of parameters describing a causal impulse response of the form
\begin{equation*}
    h[n; B] = \begin{cases}
        \sum_{k = 1}^K b_k \beta_k^n & \text{ if } n \geq 0\\
        0 & \text{ if } n < 0
    \end{cases}.
\end{equation*}
Note that we consider $B$ to be unordered with respect to $k$.
Unless otherwise stated, whenever we work with such a set of parameters, we will assume them in fully simplified form.
In particular, we assume that poles are pairwise distinct, i.e.,
for $k \neq k'$, $\beta_k \neq \beta_{k'}$ and that coefficients are non-zero, i.e., $b_k \neq 0$ for all $k$.
Achieving this form is always possible by combining terms with the same pole and dropping terms with zero coefficient.
Given a collection of input-output pairs $(x^{(r)}[\cdot], y^{(r)}[\cdot])$ with $y^{(r)}[n] = (h[\cdot; B] * x^{(r)})[n]$, our objective is to recover the parameters $B$.
Our first goal is to characterize when this is possible to better understand the problem of learning pole locations $\beta_k$ from observed datasets.

\begin{lemma}\label{lemma:linear_indepedence}
    Fix any $K$ pairwise distinct but otherwise unrestricted $\beta_1, \dots, \beta_K \in \C$, and choose any $N \geq K$.
    Let $s_k[n]$ be the length-$N$ sequence defined by $s_k[n] = \beta_k^n$ for $0 \leq n < N$.
    Then, $s_k$ are linearly independent, i.e., if $b_1, \dots, b_K \in \C$ such that for all $0 \leq n < N$, $\sum_{k = 1}^K b_k s_k[n] = 0$, then $b_k = 0$ for all $1 \leq k \leq K$.
\end{lemma}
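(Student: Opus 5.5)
The plan is to reduce the statement to the invertibility of a Vandermonde matrix. Since $N \geq K$, it suffices to use only the first $K$ equations, those with $0 \leq n < K$. If the truncated sequences $(s_k[n])_{0 \leq n < K}$ are linearly independent, then so are the full length-$N$ sequences, because any vanishing combination of the longer sequences also vanishes on the first $K$ entries.

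Restricting to $0 \leq n < K$, the hypothesis $\sum_{k=1}^K b_k \beta_k^n = 0$ reads $\mathbf{V}\mathbf{b} = \mathbf{0}$. Here $\mathbf{b} = (b_1,\dots,b_K)^\top$ and $\mathbf{V} \in \C^{K\times K}$ has entries $V_{n,k} = \beta_k^{n}$, with the convention $0^0 = 1$. This $\mathbf{V}$ is exactly the (transposed) Vandermonde matrix on the nodes $\beta_1,\dots,\beta_K$. Its determinant is the classical product
\begin{equation*}
\det \mathbf{V} = \prod_{1 \leq j < k \leq K} (\beta_k - \beta_j),
\end{equation*}
which is nonzero because the $\beta_k$ are pairwise distinct. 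Hence $\mathbf{V}$ is invertible and $\mathbf{b} = \mathbf{0}$.

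To avoid citing the determinant formula, I would instead argue via polynomials, since this is self-contained. Suppose $\mathbf{V}\mathbf{b} = \mathbf{0}$ with $\mathbf{b} \neq \mathbf{0}$. Then the rows of $\mathbf{V}$ do not span $\C^K$, so some nonzero $\mathbf{c} = (c_0,\dots,c_{K-1})$ satisfies $\mathbf{c}^\top \mathbf{V} = \mathbf{0}$. This says that $p(z) = \sum_{n=0}^{K-1} c_n z^n$ vanishes at the $K$ distinct points $\beta_1,\dots,\beta_K$. But $p$ is nonzero of degree at most $K-1$, so it has at most $K-1$ roots, a contradiction.

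The only step needing care is this row/column duality and the case where some $\beta_k = 0$. The convention $0^0 = 1$ handles the latter, and the polynomial argument is unaffected by it. There is no real obstacle here; the lemma is essentially the nonsingularity of the square Vandermonde matrix.
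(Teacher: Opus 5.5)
Your proposal is correct and follows the paper's own proof: truncate to the first $K$ equations and use the nonzero Vandermonde determinant $\prod_{j<k}(\beta_k-\beta_j)$. Your polynomial-root argument is a valid self-contained substitute for the determinant formula, and your explicit $0^0=1$ convention states what the paper's matrix, with its first row of ones, assumes implicitly.
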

\begin{proof}
    Assume without loss of generality that $N = K$. Let
    \begin{equation*}
        \underbrace{\begin{bmatrix}
        1 & 1 & \cdots & 1 \\
        \beta_1 & \beta_2 & \cdots & \beta_K \\
        \vdots & \vdots & \ddots & \vdots \\
        \beta_1^{K-1} & \beta_2^{K-1} & \cdots & \beta_K^{K-1}
        \end{bmatrix}}_{\triangleq V^\top(\beta_1, \dots, \beta_K)}
        \begin{bmatrix}
            b_1\\
            b_2\\
            \vdots\\
            b_K
        \end{bmatrix}
        =
        \begin{bmatrix}
            0\\
            0\\
            \vdots\\
            0
        \end{bmatrix}.
    \end{equation*}
    Since $V$ is a Vandermonde matrix with
    \begin{equation*}
        \det(V^\top(\beta_1, \dots, \beta_K)) = \prod_{1 \leq k < k' \leq K} (\beta_{k'} - \beta_k) \neq 0,
    \end{equation*}
    we conclude that $0 = b_1 = \dots = b_K$.
\end{proof}
Next, we show that if the impulse responses for two parameter sets $B, \tilde{B}$ match for sufficiently many samples, then it implies $B = \tilde{B}$. In other words, given sufficiently many observations, it is in principle possible to determine the parameters describing the impulse response.
\begin{theorem}\label{theorem:can_recover}
    Let $K \geq 1$, and fix a set of parameters $B = \{(b_k, \beta_k)\}_{k = 1}^K$ with $b_k, \beta_k \in \C$. Suppose we observe the first $N \geq 2K$ samples $n = 0, \dots, N - 1$ from the causal impulse response $h[n; B] = \sum_{k = 1}^K b_k \beta_k^n$. Then, it is possible to recover the values $\{(b_k, \beta_k)\}_{k = 1}^K$.
    Formally, for any other alternative choice of parameters $\tilde{B} = \{(\tilde{b}_k, \tilde{\beta}_k)\}_{k = 1}^{\tilde{K}}$ with $\tilde{K} \leq K$, if
    \begin{equation*}
        h[n; B] = h[n; \tilde{B}]
    \end{equation*}
    for all $0 \leq n < N$, then $B = \tilde{B}$, and therefore $\tilde{K} = K$.
\end{theorem}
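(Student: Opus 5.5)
The plan is to reduce the statement to Lemma~\ref{lemma:linear_indepedence} by merging the two parameter sets into a single family of distinct poles.

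\textbf{Step 1: merge the two sets.} From $h[n;B] = h[n;\tilde{B}]$ for $0 \leq n < N$ we get
\begin{equation*}
    \sum_{k=1}^K b_k \beta_k^n - \sum_{j=1}^{\tilde{K}} \tilde{b}_j \tilde{\beta}_j^n = 0, \qquad 0 \leq n < N.
\end{equation*}
Let $\gamma_1, \dots, \gamma_M$ be the distinct elements of $\{\beta_k\} \cup \{\tilde{\beta}_j\}$. Then $M \leq K + \tilde{K} \leq 2K \leq N$. Both $B$ and $\tilde{B}$ are in fully simplified form, so each $\gamma_m$ occurs at most once among the $\beta_k$ and at most once among the $\tilde{\beta}_j$. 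Grouping terms by pole, the relation becomes $\sum_{m=1}^M c_m \gamma_m^n = 0$ for $0 \leq n < N$, where
\begin{equation*}
    c_m = (\text{the } b_k \text{ with } \beta_k = \gamma_m, \text{ or } 0) - (\text{the } \tilde{b}_j \text{ with } \tilde{\beta}_j = \gamma_m, \text{ or } 0).
\end{equation*}

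\textbf{Step 2: apply the lemma.} The $\gamma_m$ are pairwise distinct and $N \geq M$, so Lemma~\ref{lemma:linear_indepedence} applies to the sequences $\gamma_m^n$, $0 \leq n < N$. It forces $c_m = 0$ for every $m$.

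\textbf{Step 3: read off the matching.} Fix $m$ and suppose $\gamma_m = \beta_k$ for some $k$ but equals no $\tilde{\beta}_j$. Then $c_m = b_k \neq 0$, a contradiction. The symmetric case, where $\gamma_m$ equals some $\tilde{\beta}_j$ but no $\beta_k$, gives $c_m = -\tilde{b}_j \neq 0$, also a contradiction. Hence every pole of $B$ is a pole of $\tilde{B}$ and vice versa. For each shared pole, $c_m = 0$ gives $b_k = \tilde{b}_j$. So $B = \tilde{B}$ as unordered sets, and $\tilde{K} = K$.

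\textbf{Main obstacle.} The only delicate point is the counting: the lemma needs the number of distinct merged poles to be at most $N$. This is exactly where $\tilde{K} \leq K$ and $N \geq 2K$ are used, giving $M \leq K + \tilde{K} \leq 2K \leq N$.

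The recovery claim (``it is possible to recover'') then follows from uniqueness. Among all parameter sets with at most $K$ terms, $B$ is the only one consistent with the $N$ observed samples. So any procedure that finds a consistent set of at most $K$ terms, for example Prony's method or exhaustive search, returns $B$. Uniqueness is all the theorem formally asserts, so no constructive algorithm is needed.
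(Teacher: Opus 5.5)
Your proof is correct and follows the same route as the paper: merge $B$ with the negated $\tilde{B}$ over the distinct poles $\gamma_m$, apply Lemma~\ref{lemma:linear_indepedence} to force every merged coefficient to vanish, and then match the terms. You also spell out two points the paper states without justification: the count $M \leq K + \tilde{K} \leq 2K \leq N$, and the use of the fully simplified form to show that each merged coefficient contains exactly one nonzero term from each set.
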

\begin{proof}
    Let $\tilde{B}^{\text{neg}} \triangleq \{(-\tilde{b}_k, \tilde{\beta}_k)\}_{k = 1}^{\tilde{K}}$, and consider a combined set of parameters $\{(g_{m}, \gamma_{m})\}_{m = 1}^{M} = B \cup \tilde{B}^{\text{neg}}$, where $M \leq N$ and the $\{\gamma_m\}$ are distinct, i.e., we combine terms with the same $\gamma_m$.
    However, coefficients $g_m$ may equal zero, so we temporarily disregard the full simplification assumption from earlier with respect to $\{(g_{m}, \gamma_{m})\}_{m = 1}^{M}$.
    Then,
    \begin{equation*}
        \eta[n] \triangleq h[n; B] - h[n; \tilde{B}] 
        = \sum_{m = 1}^M g_m \gamma_m^n
        = 0
    \end{equation*}
    for all $0 \leq n < N$. By Lemma \ref{lemma:linear_indepedence},
    \begin{equation*}
        0 = g_m = \left(\sum_{k : \beta_k = \gamma_m} b_k\right) - \left(\sum_{k : \tilde{\beta}_k = \gamma_m} \tilde{b}_k\right),
    \end{equation*}
    for all $m$. These sums both contain exactly one term, which must also be nonzero. It follows that $K = M = \tilde{K}$. For any $m$, $0 \neq \sum_{k : \beta_k = \gamma_m} b_k = \sum_{k : \tilde{\beta}_k = \gamma_m} \tilde{b}_k \triangleq w_m$, so $(w_m, \gamma_m) \in B \cap \tilde{B}$.
    Thus, $B = \tilde{B}$.

    Note that if $N < 2K$, then impulse response representations may not be unique. For instance, consider any pairwise distinct parameters $\{\beta_k\}_{k = 1}^K \cup \{\tilde{\beta}_k\}_{k = 1}^K$ , i.e., $|\{\beta_k\}_{k = 1}^K \cup \{\tilde{\beta}_k\}_{k = 1}^K| = 2K$. By dimensionality constraints, there must exist $b_k, \tilde{b}_k$ not all equal to $0$ (by Lemma \ref{lemma:linear_indepedence}, all $b_k, \tilde{b}_k \neq 0$ when $N = 2K - 1$) such that
    \begin{equation*}
        \eta[n] = h[n; B] - h[n; \tilde{B}] 
        = \sum_{k = 1}^K b_k \beta_k^n - \sum_{k = 1}^K \tilde{b}_k \tilde{\beta}_k^n\\
        = 0
    \end{equation*}
    for all $0 \leq n < N$. Thus, when $N$ is not sufficiently large, it is possible that $B \neq \tilde{B}$.
\end{proof}

Given at least $2K$ samples of the impulse response, it is theoretically possible to determine the system parameters, as Theorem \ref{theorem:can_recover} shows. All that is required is a thorough search over all possible system parameters, checking to see if the first $2K$ samples of the candidate impulse response matches the desired response. But it is in fact possible to determine the system parameters given at least $2K$ samples of any (reasonable) input-output example pair.

\begin{lemma}\label{lemma:recover_impulse_response}
    Fix any $N \geq 1$. Suppose a causal signal $x[n]$ is filtered using a causal impulse response $h[n]$ to produce $y[n]$, and we observe the first $N$ samples of this input-output pair. In particular, we have knowledge of $x[0], \dots, x[N - 1]$ and $y[0], \dots, y[N - 1]$. If $x[0] \neq 0$, then it is possible to recover $h[n]$ for $0 \leq n < N$. In particular, if $\tilde{h}$ is another causal impulse response with $y[n] = (\tilde{h} * x)[n]$ for all $0 \leq n < N$, then $\tilde{h}[n] = h[n]$ for all $0 \leq n < N$.
\end{lemma}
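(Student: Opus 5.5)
The plan is to show that causal convolution, restricted to the first $N$ samples, is a lower-triangular linear system in the unknowns $h[0], \dots, h[N-1]$ whose diagonal entries all equal $x[0] \neq 0$. Such a system is uniquely solvable by forward substitution. Concretely, since both $x$ and $h$ are causal, for $0 \leq n < N$ we have
\begin{equation*}
    y[n] = \sum_{m = 0}^{n} h[m]\, x[n - m] = x[0]\, h[n] + \sum_{m = 0}^{n-1} h[m]\, x[n-m].
\end{equation*}
The first step is to justify this truncation. The terms with $m < 0$ vanish because $h$ is causal, and the terms with $m > n$ vanish because $x$ is causal. As a result, $y[n]$ depends only on $h[0], \dots, h[n]$ and $x[0], \dots, x[n]$, all of which lie within the observed window.

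The second step is recovery by induction on $n$. For $n = 0$ we get $h[0] = y[0]/x[0]$. Assuming $h[0], \dots, h[n-1]$ have been determined, we set
\begin{equation*}
    h[n] = \frac{1}{x[0]}\left( y[n] - \sum_{m = 0}^{n-1} h[m]\, x[n-m] \right),
\end{equation*}
which uses only known quantities. Equivalently, we can form the $N \times N$ lower-triangular Toeplitz matrix $T_x$ with entries $(T_x)_{n,m} = x[n-m]$ for $m \leq n$. Its determinant is $x[0]^N \neq 0$, so $\mathbf{h} = T_x^{-1}\mathbf{y}$.

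For the uniqueness statement, I will subtract the two convolutions. The difference $d = h - \tilde h$ is causal and satisfies $(d * x)[n] = 0$ for all $0 \leq n < N$, that is, $T_x \mathbf{d} = 0$. Invertibility of $T_x$, or the same induction showing $d[0] = 0, \dots, d[N-1] = 0$ in turn, then gives $\tilde h[n] = h[n]$ on $0 \leq n < N$.

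The only point requiring care is the truncation of the convolution sum, which relies on causality of \emph{both} signals. This is also where the hypothesis that $\tilde h$ is causal is used. I will therefore state explicitly that the same finite-sum formula holds for $\tilde h$, so that the subtraction argument is valid.
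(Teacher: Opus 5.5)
Your proposal is correct and follows the paper's argument: the paper also writes the first $N$ samples of the convolution as a lower-triangular Toeplitz system $L\mathbf{h} = \mathbf{y}$ with diagonal entries $x[0]$, and inverts it because $\det(L) = x[0]^N \neq 0$. Your explicit use of causality of both signals to truncate the sum, and your subtraction argument for uniqueness of $\tilde h$, spell out details the paper leaves implicit; you also have the exponent right as $N$, where the paper's text has the typo $n$.
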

\begin{proof}
    Write
    \begin{align*}
        \underbrace{\begin{bmatrix}
        x_0 & 0 & \cdots & 0 \\
        x_1 & x_0 & \cdots & 0 \\
        \vdots & \vdots & \ddots & \vdots \\
        x_{N - 1} & x_{N - 2} & \cdots & x_0
        \end{bmatrix}}_{\triangleq L}
        \begin{bmatrix}
            h_0\\
            h_1\\
            \vdots\\
            h_{N - 1}
        \end{bmatrix}
        =
        \begin{bmatrix}
            y_0\\
            y_1\\
            \vdots\\
            y_{N - 1}
        \end{bmatrix}
    \end{align*}
    with $x_0 \triangleq x[0]$ and similarly for $y$ and $h$. The lower-triangular matrix $L$ has $\det(L) = (x[0])^n \neq 0$, so we can invert $L$ to find $h[0], \dots, h[N - 1]$ in terms of $x[0], \dots, x[N - 1]$ and $y[0], \dots, y[N - 1]$.
\end{proof}

Combining Theorem \ref{theorem:can_recover} and Lemma \ref{lemma:recover_impulse_response} yields
\begin{corollary}\label{corollary:recover_parameters_from_input_output_example}
    Let $K \geq 1$, and fix a set of parameters $B = \{(b_k, \beta_k)\}_{k = 1}^K$ with $b_k, \beta_k \in \C$. Suppose we observe the first $N \geq 2K$ samples $n = 0, \dots, N - 1$ of any input-output pair $(x[\cdot], y[\cdot])$ related by $y[n] = (h[\cdot; B] * x)[n]$. If $x[0] \neq 0$, then it is possible to recover $B$. Formally, the only choice of parameters $\tilde{B} = \{(\tilde{b}_k, \tilde{\beta}_k)\}_{k = 1}^K$ satisfying $y[n] = (h[\cdot; \tilde{B}] * x)[n]$ for $0 \leq n < N$ is $\tilde{B} = B$.
\end{corollary}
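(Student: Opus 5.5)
The plan is to chain Lemma \ref{lemma:recover_impulse_response} and Theorem \ref{theorem:can_recover}, using the causal impulse response as the intermediate object. Let $\tilde{B} = \{(\tilde{b}_k, \tilde{\beta}_k)\}_{k=1}^{K}$ be any parameter set, in fully simplified form, with $y[n] = (h[\cdot;\tilde{B}] * x)[n]$ for $0 \leq n < N$. The goal is to show $\tilde{B} = B$.

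\textbf{Step 1.} Both $h[\cdot;B]$ and $h[\cdot;\tilde{B}]$ are causal by definition, and both reproduce the same observed output samples $y[0],\dots,y[N-1]$ from the same input $x$ with $x[0] \neq 0$. Apply Lemma \ref{lemma:recover_impulse_response} with $h = h[\cdot;B]$ and $\tilde{h} = h[\cdot;\tilde{B}]$. It gives $h[n;\tilde{B}] = h[n;B]$ for all $0 \leq n < N$.

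Only the first $N$ input samples matter here, because causality of both $x$ and the impulse responses means
\begin{equation*}
y[n] = \sum_{m=0}^{n} h[m]\, x[n-m].
\end{equation*}
This is exactly what the lower-triangular system $L$ in that lemma encodes.

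\textbf{Step 2.} Since $N \geq 2K$ and $\tilde{B}$ has $\tilde{K} = K \leq K$ terms, Theorem \ref{theorem:can_recover} applies directly to the agreement of the first $N$ impulse-response samples. It yields $\tilde{B} = B$.

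\textbf{Main obstacle.} The only delicate point is bookkeeping. Theorem \ref{theorem:can_recover} requires $\tilde{B}$ to be in simplified form: distinct poles and nonzero coefficients. If a candidate $\tilde{B}$ is not simplified, first merge terms with equal poles and drop terms with zero coefficients. This does not change $h[\cdot;\tilde{B}]$ and does not increase the number of terms, so the hypothesis $\tilde{K} \leq K$ still holds. Theorem \ref{theorem:can_recover} then identifies the simplified set with $B$.

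Beyond this, the argument is a direct composition of the two earlier results, so I expect no substantive difficulty.
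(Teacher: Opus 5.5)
Your proposal is correct and follows the paper's proof: Lemma~\ref{lemma:recover_impulse_response} gives $h[n;B] = h[n;\tilde{B}]$ for $0 \leq n < N$, and Theorem~\ref{theorem:can_recover} with $\tilde{K} = K$ then forces $\tilde{B} = B$. Your extra remark on reducing a non-simplified $\tilde{B}$ is a harmless refinement, since the paper already assumes all parameter sets are in fully simplified form.
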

\begin{proof}
    By Lemma \ref{lemma:recover_impulse_response}, $y[n] = (h[\cdot; B] * x)[n]$ and $y[n] = (h[\cdot; \tilde{B}] * x)[n]$ for $0 \leq n < N$ together imply that $h[n; B] = h[n; \tilde{B}]$ for $0 \leq n < N$. Then, by Theorem \ref{theorem:can_recover}, $B = \tilde{B}$.
\end{proof}
The prior results demonstrate that learning system parameters is in principle possible under very mild assumptions. In particular, provided sufficiently many samples of any non-zero input-output pair, we can recover the system parameters by searching over all possible configurations and selecting the unique system that matches the observed data. However, this approach is not computationally feasible, especially in high dimensions, i.e., when $K$ becomes large.

These results will also show, in fact, that standard optimization objectives designed to recover these parameters will generally be non-convex, so in practice, recovery of system parameters is not straightforward. One trait that prevents convexity is lack of ordering among parameters, and pole locations in particular. Consider an unordered set of parameters $B = \{(b_k, \beta_k)\}_{k = 1}^K$ and an associated ordered set $B^{\text{ord}} = ((b_k, \beta_k))_{k = 1}^K$, which can be viewed as a vector in $\R^{2K}$. Standard optimization problems over ordered parameters in $\R^{2K}$, which is required in practical algorithms such as gradient descent, leads to non-convex objectives. In particular, 

\begin{lemma}\label{lemma:convex_property_failure}
    Fix $K \geq 1$ and $N \geq 2K$. Let $B = \{(b_k, \beta_k)\}_{k = 1}^K$ be a set of unordered parameters, let $B^{\text{ord}} = ((b_k, \beta_k))_{k = 1}^K \in \R^{2K}$ be any associated ordered set, and $\sigma(\cdot)$ be any non-identity permutation on $[K]$. Let $\sigma(B^{\text{ord}}) \triangleq \sigma\left(((b_k, \beta_k))_{k = 1}^K\right) \in \R^{2K}$. Then, for any $0 < \lambda < 1$, $\lambda B^{\text{ord}} + (1 - \lambda) \sigma(B^{\text{ord}})$ as an unordered set of parameters, does not equal $B$. In particular, with slight abuse of notation, $h[\cdot; \lambda B^{\text{ord}} + (1 - \lambda) \sigma(B^{\text{ord}})] \neq h[\cdot; B]$.
\end{lemma}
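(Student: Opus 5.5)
The plan is to find one pole $\beta_j$ of $B$ that does not survive the interpolation. Then we use Lemma~\ref{lemma:linear_indepedence} to conclude that the two impulse responses differ.

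Write $B' \triangleq \lambda B^{\text{ord}} + (1-\lambda)\sigma(B^{\text{ord}})$. Its $k$-th entry is
\begin{equation*}
(b'_k, \beta'_k) = \bigl(\lambda b_k + (1-\lambda) b_{\sigma(k)},\ \lambda \beta_k + (1-\lambda)\beta_{\sigma(k)}\bigr).
\end{equation*}
Let $S \triangleq \{k : \sigma(k) \neq k\}$. This set is nonempty because $\sigma$ is not the identity, and $\sigma(S) = S$. For $k \notin S$ the entry is unchanged, so $(b'_k,\beta'_k) = (b_k,\beta_k)$. For $k \in S$, the pole $\beta'_k$ is a strict convex combination of two distinct poles $\beta_k$ and $\beta_{\sigma(k)}$. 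Both of these lie in $P_S \triangleq \{\beta_k : k \in S\}$, and they are distinct because $B$ is fully simplified.

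The key step is to pick an extreme pole of $P_S$. Choose a real linear functional $\ell(z) = \mathrm{Re}(e^{-i\theta} z)$ with $\theta$ such that $\ell$ takes pairwise distinct values on the finite set $P_S$. This is possible because only finitely many directions $\theta$ make two given points tie. Let $j \in S$ be the index maximizing $\ell(\beta_j)$.

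For every $k \in S$, linearity of $\ell$ and the fact that $\beta_k \neq \beta_{\sigma(k)}$ give
\begin{equation*}
\ell(\beta'_k) = \lambda \ell(\beta_k) + (1-\lambda)\ell(\beta_{\sigma(k)}) < \ell(\beta_j).
\end{equation*}
The strict inequality holds because at most one of $\beta_k, \beta_{\sigma(k)}$ equals $\beta_j$, and $0<\lambda<1$. Hence $\beta'_k \neq \beta_j$ for all $k \in S$. For $k \notin S$, $\beta'_k = \beta_k \neq \beta_j$ by distinctness of the poles of $B$. Therefore $\beta_j$ does not appear among the poles of $B'$, and as unordered sets $B' \neq B$.

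For the impulse-response statement, form the difference $h[n;B] - h[n;B']$. Combine equal poles as in the proof of Theorem~\ref{theorem:can_recover}, allowing zero coefficients, to get $\sum_{m=1}^M g_m \gamma_m^n$ with distinct $\gamma_m$ and $M \le 2K \le N$. The coefficient attached to $\gamma_m = \beta_j$ is exactly $b_j \neq 0$, since $B'$ contributes nothing at $\beta_j$. By Lemma~\ref{lemma:linear_indepedence}, this sum cannot vanish for all $0 \le n < N$, so $h[\cdot; B'] \neq h[\cdot;B]$.

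The main subtlety is that $B'$ need not be fully simplified: its poles may coincide, or its coefficients may cancel. A naive comparison of sets is therefore insufficient. The extreme-point choice is what handles this, because it guarantees that $\beta_j$ is absent from $B'$ regardless of any such collisions. The only delicate point in that step is choosing $\theta$ to avoid ties among the values $\ell(\beta_k)$.
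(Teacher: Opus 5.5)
Your proof is correct, but it takes a different route from the paper's. The paper uses a global invariant. It sums the identity $|\lambda a + (1-\lambda)b|^2 = \lambda|a|^2 + (1-\lambda)|b|^2 - \lambda(1-\lambda)|a-b|^2$ over $k$, and uses that the permutation leaves $\sum_k |\beta_k|^2$ unchanged. This gives $\sum_k |\beta'_k|^2 = \sum_k |\beta_k|^2 - \lambda(1-\lambda)\sum_k |\beta_k - \beta_{\sigma(k)}|^2 < \sum_k |\beta_k|^2$, so the multiset of poles must change. You instead use an extreme-point argument: a generic real functional isolates a vertex $\beta_j$ of the moved poles, and no strict convex combination of two distinct moved poles can reach it.

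The paper's version is a one-line computation that needs no generic choice of direction. Yours is constructive: it names a specific pole of $B$ that is absent from $B'$. That is exactly what lets you get the impulse-response clause directly from Lemma~\ref{lemma:linear_indepedence}, despite possible pole collisions or coefficient cancellations in $B'$. The paper's proof stops at ``the poles move'' and leaves that clause implicit.

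The paper's invariant also yields a missing pole, by pigeonhole: if all $K$ distinct poles of $B$ occupied the $K$ slots of $B'$, the two pole multisets would coincide. So your last step could be appended to the paper's argument unchanged. Alternatively, the clause follows by simplifying $B'$ and invoking Theorem~\ref{theorem:can_recover}.
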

\begin{proof}
    The poles cannot remain in the same locations for $0 < \lambda < 1$. In particular,
    \begin{align*}
        \sum_k &|\lambda \beta_k + (1 - \lambda) \beta_{\sigma(k)}|^2\\
        &= \sum_k \lambda |\beta_k|^2 + (1 - \lambda) |\beta_{\sigma(k)}|^2 - \lambda(1 - \lambda) |\beta_k - \beta_{\sigma(k)}|^2\\
        &= \sum_k |\beta_k|^2 - \lambda(1 - \lambda) |\beta_k - \beta_{\sigma(k)}|^2\\
        &< \sum_k |\beta_k|^2.
    \end{align*}
    In fact, this proof can be used to show that the same result holds for other parameterizations. For example, if $\beta_k$ are expressed in polar coordinates, then we can still apply the prior analysis to the (real-valued) polar coordinates directly. 
\end{proof}

Under the assumptions and with the setup of Corollary \ref{corollary:recover_parameters_from_input_output_example} (importantly, we assume $N \geq 2K$), a typical objective used to optimize over ordered estimated parameters $\tilde{B}^{\text{ord}}$ is
\begin{equation*}
    \mathcal{L}(\tilde{B}^{\text{ord}}; x[\cdot], y[\cdot]) = d(y[\cdot], (h[\cdot; \tilde{B}^{\text{ord}}] * x)[\cdot])
\end{equation*}
where $d(\cdot, \cdot) \geq 0$ is a measure of distance between two length-$N$ vectors with the property that $d(a, b) = 0$ if and only if $a = b$, e.g., $d(a, b) = \|a - b\|_2^2$. We also allow $N = \infty$ for the purpose of analysis. Let $B^{\text{ord}}$ be an arbitrary ordering on $B$, and let $\sigma$ be an arbitrary non-identity permutation on $[K]$. Then, by Lemma \ref{lemma:convex_property_failure},
\begin{equation*}
    \mathcal{L}(B^{\text{ord}}; x[\cdot], y[\cdot]) = \mathcal{L}(\sigma(B^{\text{ord}}); x[\cdot], y[\cdot]) = 0,
\end{equation*}
but for any $0 < \lambda < 1$, 
\begin{equation*}
    \mathcal{L}(\lambda B^{\text{ord}} + (1 - \lambda)\sigma(B^{\text{ord}}); x[\cdot], y[\cdot]) > 0.
\end{equation*}

For any objective with the property that $\mathcal{L}(\tilde{B}^{\text{ord}}) = 0$ if and only if $\tilde{B} = B$ with $B$ being the ground-truth system, the same result holds, i.e., we lose convexity. Even though the system parameters may be in principle recoverable, it can be difficult in practice.

Assuming poles are now inside the unit circle, let
\vspace{-0.1in}
\begin{equation*}
    H(e^{i \omega}; B) = \sum_{k = 1}^K \frac{b_k}{1 - \beta_k e^{- i \omega}}
\end{equation*}
be the DTFT of $h[\cdot; B]$, and consider the objective
\begin{equation*}
    \mathcal{L}'(\tilde{B}^{\text{ord}}; B) = \frac{1}{2 \pi} \int_{-\pi}^\pi |H(e^{i \omega}; \tilde{B}^{\text{ord}}) - H(e^{i \omega}; B)|^2 \, d\omega
\end{equation*}
which has $\mathcal{L}'(\tilde{B}^{\text{ord}}; B) = 0$ if and only if $H(e^{i \omega}; \tilde{B}^{\text{ord}}) = H(e^{i \omega}; B)$. Thus, since the full impulse response can be recovered by an inverse transform, $\mathcal{L}'(\tilde{B}^{\text{ord}}; B) = 0$ if and only if $\tilde{B} = B$, so convexity over $\tilde{B}^{\text{ord}}$ does not hold. The same applies to the objective
\begin{equation*}
    \mathcal{L}''(\tilde{B}^{\text{ord}}; B) = \frac{1}{2 \pi} \int_{-\pi}^\pi \left|1 - \frac{H(e^{i \omega}; \tilde{B}^{\text{ord}})}{H(e^{i \omega}; B)}\right|^2 \, d\omega, 
\end{equation*}
which is a more direct consideration in channel equalization, and more generally, for any $w(\omega) > 0$ to
\begin{equation*}
    \mathcal{L}'''(\tilde{B}^{\text{ord}}; B) = \frac{1}{2 \pi} \int_{-\pi}^\pi w(\omega) |H(e^{i \omega}; \tilde{B}^{\text{ord}}) - H(e^{i \omega}; B)|^2 \, d\omega.
\end{equation*}

Next, we show that the objective $\mathcal{L}$ can have Hessian with unbounded condition number at an optimal point $\tilde{B}^{\text{ord}} = B^{\text{ord}}$, leading to ``narrow-valley'' behavior. For a symmetric square matrix $A$, we let $\kappa(A) = \lambda^{\text{max}} / \lambda^{\text{min}}$ denote the condition number of $A$. We will first consider a simple example where the condition number becomes unbounded. Then, we can extend this to the more general situation described above.

\begin{figure}[!ht]
    \centering
    \vspace{-0.1in}
    \includegraphics[width=0.5\linewidth]{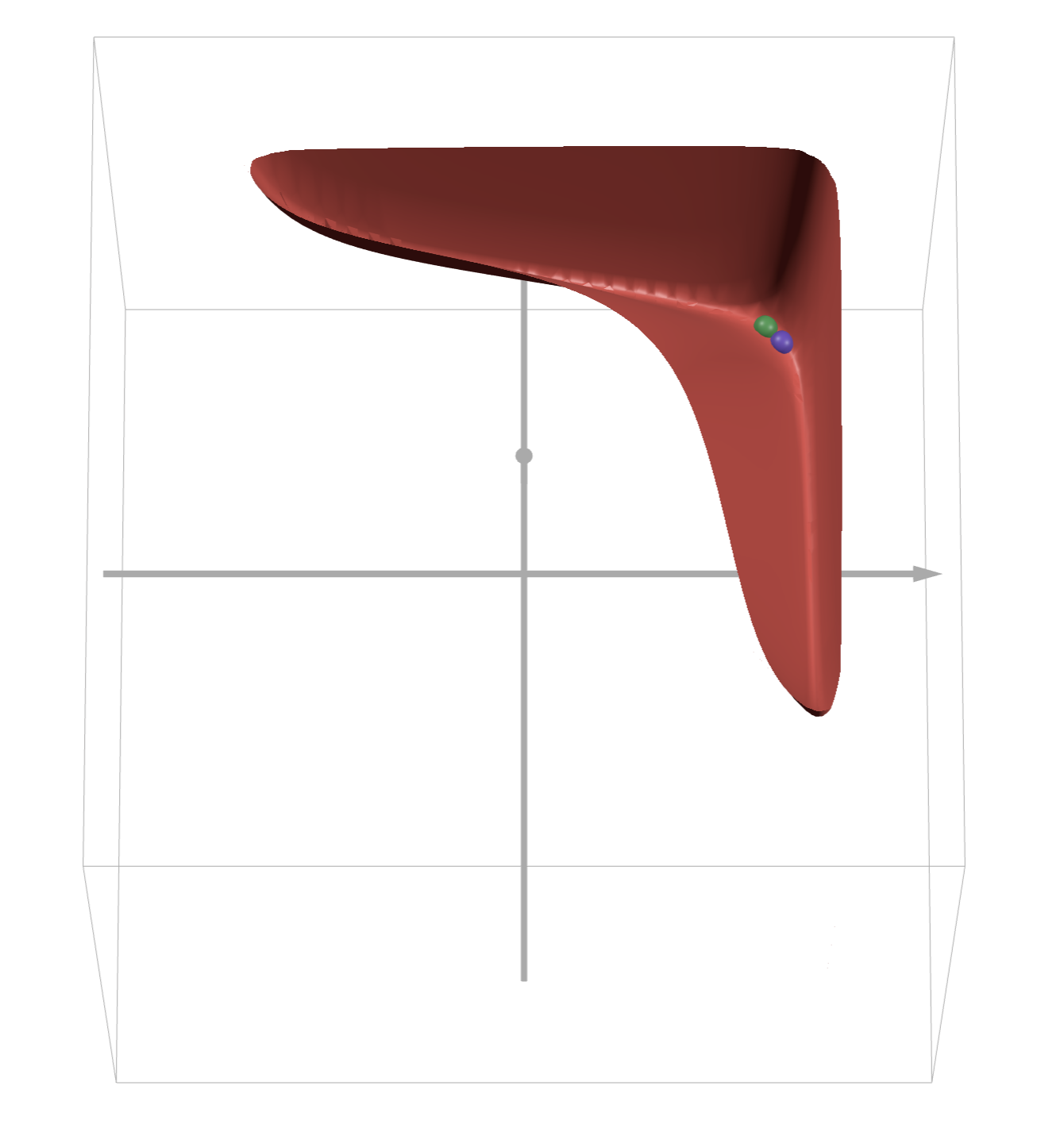}
    \vspace{-0.1in}

    \caption{Surface of $f(x, y)$ when $c = 0.75$ and $d = 0.8$. Green and purple points represent $(c, d)$ and $(d, c)$, respectively. Note the ``narrow-valley'' behavior of the objective at the optimum points, reflective of a large condition number for the Hessian.}
    \vspace{-0.2in}
    \label{fig:mesh1}
\end{figure}

\begin{figure*}[h]
    \centering
    \includegraphics[width=\linewidth]{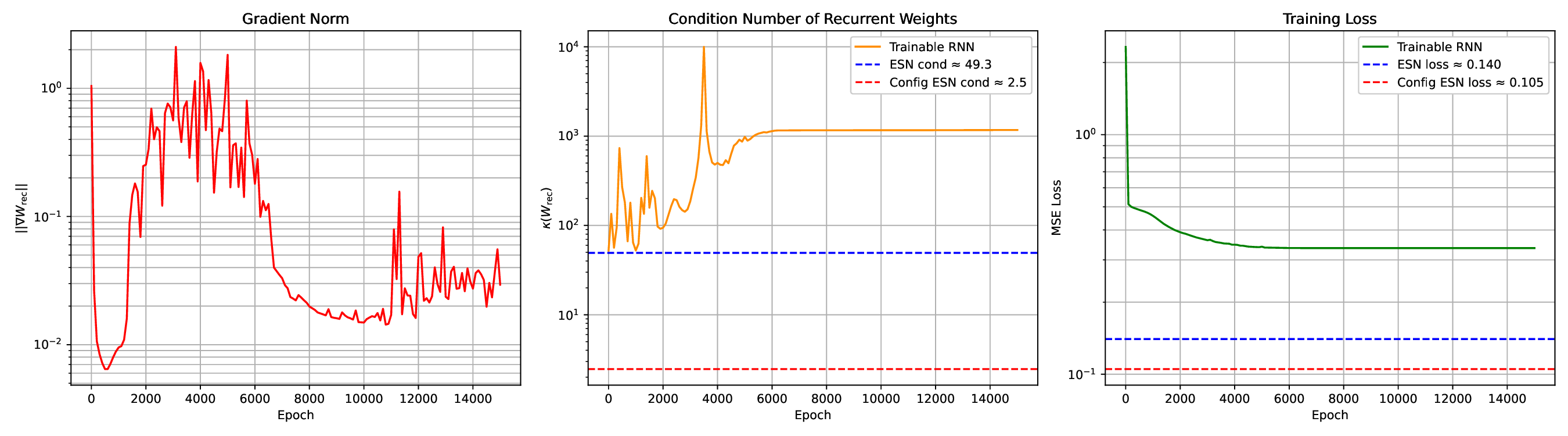}
    \vspace{-0.3in}
    \caption{Training dynamics of a trainable RNN. Left: gradient norm decay. Middle: condition number of recurrent matrix $W_{\mathrm{rec}}$. Right: training loss.}
    \vspace{-0.1in}
    \label{fig:training_dynamics}
\end{figure*}

\begin{figure}[h]
    \centering
    \vspace{-0.1in}
    \includegraphics[width=0.8\linewidth]{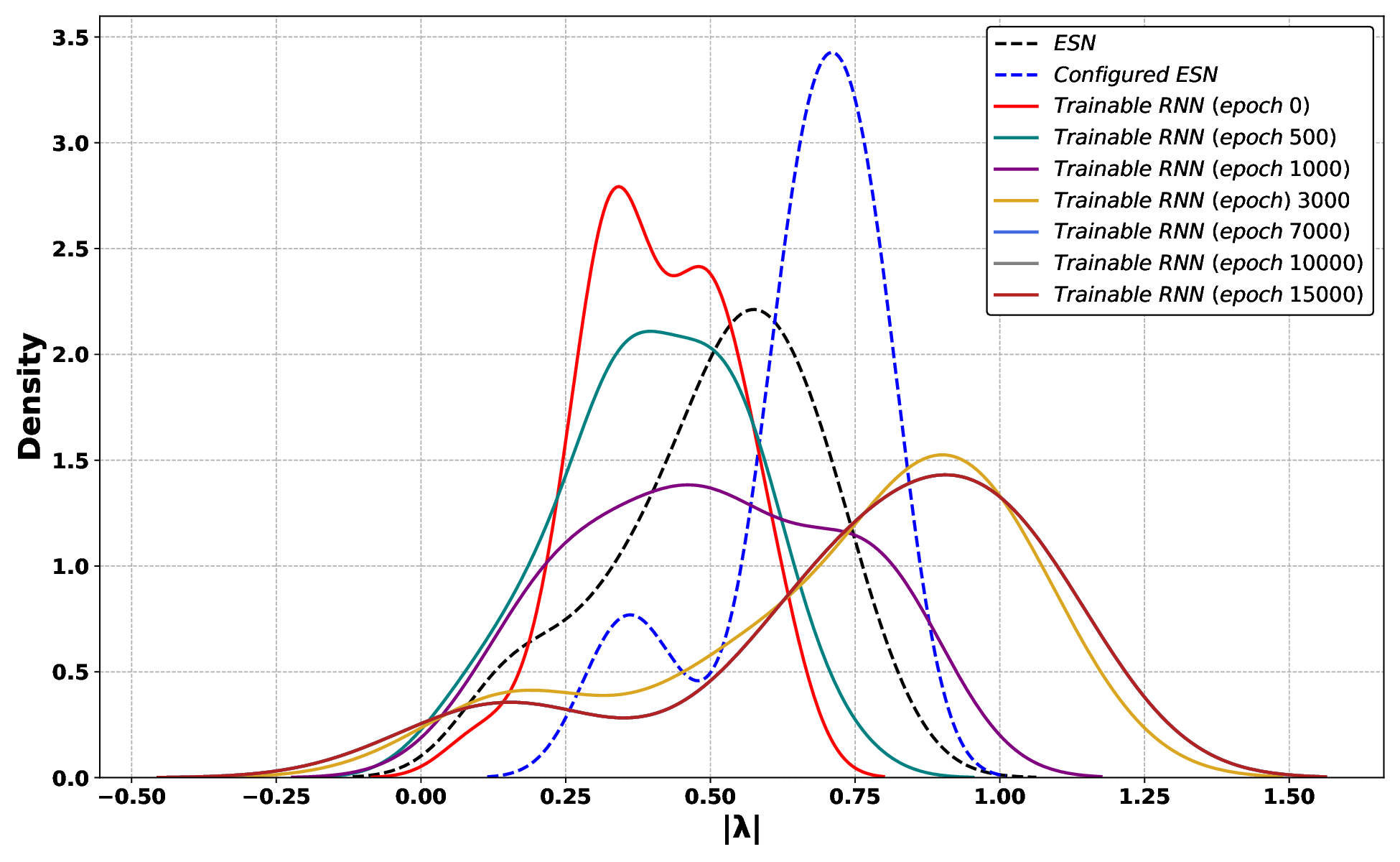}
    \vspace{-0.1in}
    \caption{Spectral density of eigenvalue magnitudes $|\lambda|$ over epochs.}
    \label{fig:spectral_density}
\end{figure}

Fix two real numbers $c, d \in (0, 1)$. Let $K = 2$ and suppose the ground-truth system is described by $B = \{(1, c), (1, d)\}$. Suppose we wish to learn the locations of the poles by minimizing the objective
\begin{align*}
    f(x, y) &= \sum_{n = -\infty}^\infty |h[n; B] - h[n; \tilde{B}]|^2\\
    &= \sum_{n = 0}^\infty \left(x^n + y^n - c^n - d^n\right)^2
\end{align*}
where $\tilde{B} = \{(1, x), (1, y)\}$ and $x, y \in (-1, 1) \subset \R$. The hessian of $f$ evaluated at an optimal point $(x, y) = (c, d)$ is
\begin{equation*}
    H_f(c, d) =
    \begin{bmatrix}
        \frac{2 \left(1 + c^2\right)}{\left(1 - c^2\right)^3} & \frac{2 \left(1 + c d\right)}{\left(1 - c d\right)^3}\\
        \frac{2 \left(1 + c d\right)}{\left(1 - c d\right)^3} & \frac{2 \left(1 + d^2\right)}{\left(1 - d^2\right)^3}
    \end{bmatrix}
    = 2 \begin{bmatrix}
        \langle u_c, u_c \rangle & \langle u_c, u_d \rangle\\
        \langle u_d, u_c \rangle & \langle u_d, u_d \rangle
    \end{bmatrix}
\end{equation*}
where $u_\alpha$ with $|\alpha| < 1$ is the square-summable sequence $u_\alpha[n] = (n + 1)\alpha^n$ for $n \geq 0$.
Let
\begin{align*}
    D = \det\left(\frac{1}{2}H_f(c, d)\right) = \langle u_c, u_c \rangle \langle u_d, u_d \rangle - \langle u_c, u_d \rangle^2 \geq 0
\end{align*}
with equality if and only if $c = d$
and
\begin{equation*}
    T = \tr\left(\frac{1}{2}H_f(c, d)\right) = \langle u_c, u_c \rangle + \langle u_d, u_d \rangle > 0.
\end{equation*}
We conclude that the hessian is positive definite if $c \neq d$. Furthermore, by continuity of second derivatives, the objective $f(x, y)$ must be (strongly) convex in a neighborhood of $(c, d)$ when $c \neq d$. The condition number is
\begin{equation*}
    \kappa(H_f(c, d)) = \frac{1 + \gamma}{1 - \gamma}, \quad with
\end{equation*}

\begin{equation*}
    \gamma = \sqrt{1 - \frac{4D}{T^2}} = \frac{\sqrt{\left(\langle u_c, u_c \rangle - \langle u_d, u_d \rangle\right)^2 + 4 \langle u_c, u_d \rangle^2}}{\langle u_c, u_c \rangle + \langle u_d, u_d \rangle}.
\end{equation*}
Since $D$ and $T$ are continuous in $c, d$, it follows that when $c \to d$, then $\gamma \to 1$ and therefore $\kappa(H_f(c, d)) \to \infty$. Even though the objective can be locally convex, the condition number can still become arbitrarily large. Furthermore, this behavior can occur without poles approaching unit magnitude.

This example is still relevant to the case with complex poles and gains, as the next lemma shows.

\begin{lemma}\label{lemma:hessian_condition_number_lower_bound}
    Consider a twice-differentiable function $f : \R^{m + n} \to \R$ at a point $z = (x, y)$ written in the form $f(x, y)$ for $x \in \R^m$ and $y \in \R^n$. Let $f_x(\cdot; y) : \R^m \to \R$ be the restriction of $f$ to the first argument with $y$ fixed, i.e., $f_x(x; y) = f(x, y)$. Then if $H_f(x, y) \succeq 0$,
    \begin{equation*}
        \kappa\left(H_f(x, y)\right) \geq \kappa\left(H_{f_x(\cdot; y)}(x)\right).
    \end{equation*}
\end{lemma}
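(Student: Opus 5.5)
The Hessian of the restriction is a principal submatrix of the full Hessian, so the claim reduces to eigenvalue interlacing, which we would establish via Rayleigh quotients. Write
\begin{equation*}
    H_f(x, y) = \begin{bmatrix} A & C\\ C^\top & E \end{bmatrix},
\end{equation*}
where $A \in \R^{m \times m}$ is the block of second partial derivatives in the $x$ coordinates. Since $f_x(x'; y) = f(x', y)$ for all $x'$, differentiating twice in $x'$ at $x' = x$ gives $H_{f_x(\cdot; y)}(x) = A$ exactly. This is immediate from twice-differentiability and requires no further argument.

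The main step is to compare the extreme eigenvalues of $A$ with those of $H \triangleq H_f(x, y)$ using the Courant--Fischer (Rayleigh quotient) characterization. For any unit vector $u \in \R^m$, the padded vector $\hat{u} = (u, 0) \in \R^{m+n}$ is also a unit vector, and $\hat{u}^\top H \hat{u} = u^\top A u$. Taking the maximum over $u$ gives
\begin{equation*}
    \lambda^{\max}(A) = \max_{\|u\| = 1} u^\top A u \leq \max_{\|v\| = 1} v^\top H v = \lambda^{\max}(H).
\end{equation*}
Taking the minimum over $u$ gives, symmetrically, $\lambda^{\min}(A) \geq \lambda^{\min}(H)$. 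Since $H \succeq 0$, every principal submatrix is also positive semidefinite, so $A \succeq 0$ and $0 \leq \lambda^{\min}(H) \leq \lambda^{\min}(A)$.

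We then combine the two bounds:
\begin{equation*}
    \kappa(H) = \frac{\lambda^{\max}(H)}{\lambda^{\min}(H)} \geq \frac{\lambda^{\max}(A)}{\lambda^{\min}(H)} \geq \frac{\lambda^{\max}(A)}{\lambda^{\min}(A)} = \kappa(A).
\end{equation*}
Both the numerator and the denominator comparisons are legitimate because every quantity involved is nonnegative.

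The only delicate point is degeneracy. If $\lambda^{\min}(H) = 0$, we adopt the natural convention $\kappa(H) = \infty$, and the inequality then holds trivially. We would state this convention explicitly, and likewise treat the case $\lambda^{\min}(A) = 0$, which forces $\lambda^{\min}(H) = 0$. With that settled, the lemma follows.

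As a usage remark, this lemma lets the real-pole example above lower-bound the conditioning in the complex-parameter setting. Fixing the gains and the imaginary parts turns the problem into the two-variable function $f$, so the unbounded condition number as $c \to d$ carries over.
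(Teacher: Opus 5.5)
Your proposal is correct and follows the paper's own argument. Both proofs bound $\lambda^{\max}$ and $\lambda^{\min}$ of the full Hessian against those of the restricted Hessian by restricting the Rayleigh quotient to vectors with $u_y = 0$, then take the ratio. Your explicit use of $H_f(x,y) \succeq 0$ to justify dividing the two inequalities, and your convention for the case $\lambda^{\min} = 0$, are details the paper leaves implicit.
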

\begin{proof}
    Let $u \in \R^{m + n}$ be a vector, and consider $u = (u_x, u_y)$ to be its components with $u_x \in \R^m$ and $u_y \in \R^n$. Then,
    \begin{align*}
        \lambda^{\text{max}}(H_f(x, y)) &= \max_{\|u\| = 1} u^\top H_f(x, y) u\\
        &\geq \max_{\|u\| = 1, u_y = 0} u^\top H_f(x, y) u\\
        &= \lambda^{\text{max}}\left(H_{f_x(\cdot; y)}(x)\right).
    \end{align*}
    Similarly,
    \begin{align*}
        \lambda^{\text{min}}(H_f(x, y)) &= \min_{\|u\| = 1} u^\top H_f(x, y) u\\
        &\leq \min_{\|u\| = 1, u_y = 0} u^\top H_f(x, y) u\\
        &= \lambda^{\text{min}}\left(H_{f_x(\cdot; y)}(x)\right).
    \end{align*}
    Combining these inequalities yields the desired result.
\end{proof}

The previously studied objective $f(x, y)$ is a restriction of the objective $\mathcal{L}$ used for learning complex-valued poles and gains. Regardless of cartesian/polar pole parametrization, the condition number $\kappa(H_f(c, d))$ remains a lower bound on the condition number of the full Hessian since $c, d \in (0, 1)$.

\section{Empirical Results}
\label{sec:result}
To empirically substantiate the theoretical difficulties inherent in pole learning, we consider an online real-time wireless symbol detection task. A RNN trained via BPTT is evaluated against two fixed-dynamics baselines: a randomly initialized ESN \cite{zhou2020learning} and a domain-informed, configured ESN \cite{10974467, letter_ummay}, each employing 32 recurrent neurons. To analyze the training behavior of these architectures, we track the evolution of gradient norms, condition numbers, and recurrent eigenvalue spectra across training epochs. These metrics provide insight into the internal optimization dynamics of trainable RNNs and enable a direct comparison fixed pole RNNs.

The Left panel of Fig.~\ref{fig:training_dynamics} shows the gradient norm $\|\nabla W_{rec}\|$ and after early fluctuations, the gradient magnitude stabilizes at a low but non-negligible value (around $10^{-1}$), indicating that gradients do not vanish entirely. However, despite this, the loss remains stagnant (Right panel), suggesting that the gradients are not effective in escaping flat or ill-conditioned regions of the landscape. This observation, together with the high condition number $\kappa(W_{\mathrm{rec}})$ (around $10^3$) seen in the middle plot, is consistent with the presence of saddle points and poor local curvature for gradient-based optimization as predicted by the Hessian analysis in Section~\ref{sec:pole_learning_hard}.
For the training loss plot over iterations, it is shown that there is a steep initial drop and then the loss quickly saturates, indicating that optimization is stuck in a narrow or flat region of the loss surface which again is reflecting the ill-conditioning caused by recurrent dynamics.

Fig.~\ref{fig:spectral_density} further illustrates how trainable RNNs suffer from uncontrolled drift in pole locations. We plot the spectral density of the magnitudes of eigenvalues $|\lambda|$ over various epochs. Unlike ESNs (dashed black) or configured RNNs (blue), where pole distributions are concentrated in a stable region, trainable RNNs show increasing dispersion, with eigenvalues spreading toward and beyond the unit circle. This drift reflects the absence of constraints on pole magnitudes during BPTT training and highlights the practical difficulty of maintaining stable dynamics in trainable RNNs.

\section{Conclusion}
\label{sec:conclusion}
Our analysis demonstrates that learning internal reservoir dynamics is difficult because training objectives can be highly non-convex and ill-conditioned, posing key challenges for GD-based optimization algorithms. Furthermore, these issues can occur even without poles approaching unit magnitude. This motivates the need for fixed randomly initialized internal dynamics or domain informed configured recurrent dynamics used in ESNs. These fixed reservoir-like configurations bypass the challenges of pole learning and lead to practical and efficient real-time training.

\bibliographystyle{IEEEtran}
\bibliography{ref}

\begin{thebibliography}{10}
\providecommand{\url}[1]{#1}
\csname url@samestyle\endcsname
\providecommand{\newblock}{\relax}
\providecommand{\bibinfo}[2]{#2}
\providecommand{\BIBentrySTDinterwordspacing}{\spaceskip=0pt\relax}
\providecommand{\BIBentryALTinterwordstretchfactor}{4}
\providecommand{\BIBentryALTinterwordspacing}{\spaceskip=\fontdimen2\font plus
\BIBentryALTinterwordstretchfactor\fontdimen3\font minus \fontdimen4\font\relax}
\providecommand{\BIBforeignlanguage}[2]{{%
\expandafter\ifx\csname l@#1\endcsname\relax
\typeout{** WARNING: IEEEtran.bst: No hyphenation pattern has been}%
\typeout{** loaded for the language `#1'. Using the pattern for}%
\typeout{** the default language instead.}%
\else
\language=\csname l@#1\endcsname
\fi
#2}}
\providecommand{\BIBdecl}{\relax}
\BIBdecl

\bibitem{goodfellow2016deep}
I.~Goodfellow, Y.~Bengio, A.~Courville, and Y.~Bengio, \emph{Deep learning}.\hskip 1em plus 0.5em minus 0.4em\relax MIT press Cambridge, 2016, vol.~1, no.~2.

\bibitem{letter_ummay}
U.~S. Khan, L.~Liu, S.~Jere, L.~Zheng, and Y.~Yi, ``Configuring rnn weights for mimo-ofdm receive processing: Informing rnn with domain knowledge,'' \emph{IEEE Wireless Commun. Letters}, pp. 1--1, 2025.

\bibitem{jaeger2001echo}
H.~Jaeger, ``The “echo state” approach to analysing and training recurrent neural networks-with an erratum note,'' \emph{Bonn, Germany: German national research center for information technology gmd technical report}, vol. 148, no.~34, p.~13, 2001.

\bibitem{zhou2020learning}
Z.~Zhou, L.~Liu, and H.-H. Chang, ``Learning for detection: Mimo-ofdm symbol detection through downlink pilots,'' \emph{IEEE Trans. Wireless Commun.}, vol.~19, no.~6, pp. 3712--3726, 2020.

\bibitem{zhou2020rcnet}
Z.~Zhou, L.~Liu, S.~Jere, Y.~Yi \emph{et~al.}, ``Rcnet: Incorporating structural information into deep rnn for mimo-ofdm symbol detection with limited training,'' \emph{arXiv preprint arXiv:2003.06923}, 2020.

\bibitem{xu2022rc}
J.~Xu, Z.~Zhou, L.~Li, L.~Zheng, and L.~Liu, ``Rc-struct: A structure-based neural network approach for mimo-ofdm detection,'' \emph{IEEE Trans. Wireless Commun.}, vol.~21, no.~9, pp. 7181--7193, 2022.

\bibitem{xu2023detect}
J.~Xu, L.~Li, L.~Zheng, and L.~Liu, ``Detect to learn: Structure learning with attention and decision feedback for mimo-ofdm receive processing,'' \emph{IEEE Trans. Commun.}, vol.~72, no.~1, pp. 146--161, 2023.

\bibitem{jere2023channel}
S.~Jere, R.~Safavinejad, L.~Zheng, and L.~Liu, ``Channel equalization through reservoir computing: A theoretical perspective,'' \emph{IEEE Wireless Commun. Letters}, vol.~12, no.~5, pp. 774--778, 2023.

\bibitem{jere2023theoretical}
S.~Jere, R.~Safavinejad, and L.~Liu, ``Theoretical foundation and design guideline for reservoir computing-based mimo-ofdm symbol detection,'' \emph{IEEE Trans. Commun.}, vol.~71, no.~9, pp. 5169--5181, 2023.

\bibitem{10974467}
S.~Jere, L.~Zheng, K.~Said, and L.~Liu, ``Towards xai: Configuring rnn weights using domain knowledge for mimo receive processing,'' \emph{IEEE Trans. Wireless Commun.}, pp. 1--1, 2025.

\bibitem{lillicrap2019backpropagation}
T.~P. Lillicrap and A.~Santoro, ``Backpropagation through time and the brain,'' \emph{Current opinion in neurobiology}, vol.~55, pp. 82--89, 2019.

\bibitem{werbos2002backpropagation}
P.~J. Werbos, ``Backpropagation through time: what it does and how to do it,'' \emph{Proceedings of the IEEE}, vol.~78, no.~10, pp. 1550--1560, 2002.

\end{thebibliography}


\end{document}